\newtheorem{theorem}{Theorem}
\newtheorem{lemma}[theorem]{Lemma}
\newcommand{\oea}{\mbox{${(1 + 1)}$~EA}\xspace}
\newcommand{\onemax}{\textsc{OneMax}\xspace}
\newcommand{\LO}{\textsc{Leading\-Ones}\xspace}
\newcommand{\leadingones}{\LO}
\newcommand{\jump}{\textsc{Jump}\xspace}
\newcommand{\oneminmax}{\textsc{OneMinMax}\xspace}
\newcommand{\cocz}{\textsc{COCZ}\xspace}
\newcommand{\lotz}{\textsc{LOTZ}\xspace}
\newcommand{\ojzj}{\textsc{OneJumpZeroJump}\xspace}
\newcommand{\onejumpzerojump}{\textsc{OneJumpZeroJump$_{n,k}$}\xspace}
\newcommand{\jumpnk}{\textsc{Jump$_{n,k}$}}\xspace
\DeclareMathOperator{\cDis}{cDis}
\DeclareMathOperator{\mutbeta}{MUT^{\beta}(\cdot)}
\DeclareMathOperator{\mutbetaarg}{MUT^{\beta}}
\let\originalleft\left
\let\originalright\right
\renewcommand{\left}{\mathopen{}\mathclose\bgroup\originalleft}
\renewcommand{\right}{\aftergroup\egroup\originalright}
\begin{document}

\title{A First Runtime Analysis of the NSGA-II on a Multimodal Problem}

\author{Benjamin Doerr
\and
Zhongdi Qu}
\date{}

\maketitle

\begin{abstract}
Very recently, the first mathematical runtime analyses of the multi-objective evolutionary optimizer NSGA-II have been conducted. We continue this line of research with a first runtime analysis of this algorithm on a benchmark problem consisting of two multimodal objectives. We prove that if the population size $N$ is at least four times the size of the Pareto front, then the NSGA-II with four different ways to select parents and bit-wise mutation optimizes the OneJumpZeroJump benchmark with jump size~$2 \le k \le n/4$ in time $O(N n^k)$. When using fast mutation, a recently proposed heavy-tailed mutation operator, this guarantee improves by a factor of $k^{\Omega(k)}$. Overall, this work shows that the NSGA-II copes with the local optima of the OneJumpZeroJump problem at least as well as the global SEMO algorithm.
\end{abstract}

{\sloppy 
\section{Introduction}

The mathematical runtime analysis of evolutionary algorithms (EAs) has contributed significantly to our understanding of these algorithms, given advice on how to set their parameters, and even proposed new algorithms~ \cite{AugerD11,DoerrN20,Jansen13,NeumannW10}. Most of the insights, however, have been obtained by regarding artificially simple algorithms such as the \oea, the fruit fly of EA research.

In contrast, the recent work~\cite{ZhengLD22} succeeded in analyzing the \emph{non-dominated sorting genetic algorithm II (NSGA\nobreakdash-II)}~\cite{DebPAM02}, the multi-objective EA (MOEA) most used in practice~\cite{ZhouQLZSZ11}. This line of research was immediately followed up in~\cite{BianQ22ppsn} and~\cite{ZhengD22gecco}. These three works, just like the vast majority of the theoretical works on other MOEAs, only regard multi-objective problems composed of unimodal objectives (see Section~\ref{sec:previous} for more details).

In this work, we continue the runtime analysis of the NSGA\nobreakdash-II with a first analysis on a problem composed of two multi-modal objectives, namely the \ojzj problem proposed in~\cite{DoerrZ21aaai}. This problem, defined on bit strings of length~$n$, is a natural multi-objective analogue of the single-objective \jump problem, which might be the multimodal problem most studied in single-objective runtime analysis. The \jump problem (and the two objectives of the \ojzj problem) comes with a difficulty parameter~$k \in [1..n] := \{1, \dots, n\}$, which is the width of the valley of low fitness around the global optimum. Consequently, typical hillclimbers at some point need to flip the right $k$ bits, which is difficult already for moderate sizes of~$k$. For the multi-objective \ojzj problem the situation is similar. Here the set of Pareto optima is not a connected set in the search space $\{0,1\}^n$, but there are solutions which can only be reached from other points on the Pareto front by flipping $k$ bits, which creates a challenge similar to the single-objective case. 

\emph{Our results:} We conduct a mathematical runtime analysis of the NSGA\nobreakdash-II algorithm on the \ojzj problem with jump sizes $k \in [2..\frac 14 n]$. We allow that $k$ is functionally dependent on~$n$ and let all asymptotic notation be with respect to $n$. We note that $k=1$ is not interesting since it gives the \oneminmax problem studied in~\cite{ZhengLD22}. Since the runtimes we observe, as in the single-objective case, are at least exponential in~$k$, the restriction $k \le \frac 14 n$, done mostly to avoid some not very interesting technicalities, does not exclude any particular relevant situation. 

As \emph{runtime}, we consider the number of fitness evaluations until the full Pareto front is covered by the parent population, that is, at least one individual for each Pareto-optimal objective value is contained in the parent population of the NSGA\nobreakdash-II. As in~\cite{ZhengLD22}, we assume that the population size $N$ of the NSGA\nobreakdash-II is sufficiently large, here at least four times the size of the Pareto front (since a population size equal to the Pareto front size does not suffice to find the Pareto front even of the simple \oneminmax problem~\cite{ZhengLD22}, this assumption appears justified). We regard the NSGA\nobreakdash-II with four different ways to select the parents (each individual once (``fair selection''), uniform, $N$ independent binary tournaments, and $N$ binary tournaments from two random permutations of the population (``two-permutation tournament scheme'')), with bit-wise mutation with mutation rate~$\frac 1n$, and, for the theoretical analyses, without crossover. For all these variants of the NSGA\nobreakdash-II, we prove an expected runtime of at most $(1+o(1)) K N n^k$ on the $\ojzj$ problem with jump size~$k$, where $K$ is a small constant depending on the selection method.
Hence for $N = \Theta(n)$, the NSGA\nobreakdash-II satisfies the same asymptotic runtime guarantee of $O(n^{k+1})$ as the (mostly relevant in theory) algorithm global SEMO (GSEMO), for which a runtime guarantee of $(1+o(1)) 1.5 e (n-2k+3) n^k$ was shown in~\cite{DoerrZ21aaai}.

Since it has been observed that a heavy-tailed mutation operator called \emph{fast mutation} can significantly speed up leaving local optima~\cite{DoerrLMN17}, we also regard the NSGA\nobreakdash-II with this mutation operator. Similar to previous works, we manage to show a runtime guarantee which is lower by a factor of $k^{\Omega(k)}$ (see Theorem~\ref{thm:heavy} for a precise statement of this result). This result suggests that the NSGA\nobreakdash-II, similar to many other algorithms, profits from fast mutation when local optima need to be left. 

This work is organized as follow. We give an overview of the state of the art in the following section. In Section~\ref{sec:prelim}, we describe the NSGA\nobreakdash-II algorithm and the \ojzj benchmark. The heart of this work is the mathematical runtime analysis in Section~\ref{sec:runtime}. A brief experimental analysis can be found in Section~\ref{sec:exp}. The last section contains a conclusion and some directions for future works.

\section{State of the Art}\label{sec:previous}
The mathematical runtime analysis of evolutionary algorithms is a small, but established research area. Its foundations were laid in the nineties of the last century in works like~\cite{Rudolph97,DrosteJW02} (noting that a decent number of sporadic results existed earlier). Runtime analyses for MOEAs follows soon thereafter~\cite{LaumannsTDZ02,Giel03}. Both in single-objective and in multi-objective evolutionary computation theory, the typical focus of the early works was on analyzing how very simple algorithms (such as randomized local search, the \oea, their multi-objective counterparts, the simple evolutionary multi-objective optimizer (SEMO), or the global SEMO) solve simple benchmark problems (such as \onemax and \leadingones or \oneminmax or \lotz). While the single-objective side soon moved on to more complex algorithms~\cite{JansenJW05,Witt06} and problems~\cite{ScharnowTW04,NeumannW07}, the progress on the multi-objective side was slower. In particular, apart from an early work on the hypervolume-based SIBEA~\cite{BrockhoffFN08} (with~\cite{NguyenSN15,DoerrGN16} as only follow-up works), it was only in~\cite{LiZZZ16,HuangZCH19,HuangZ20} that the decomposition-based MOEA/D was analyzed. 

The first mathematical runtime analysis of the NSGA\nobreakdash-II, the MOEA most used in practice, was conducted only very recently~\cite{ZhengLD22}. It showed that this algorithm can efficiently find the Pareto front of the \oneminmax and \lotz bi-objective problems when the population size $N$ is at least some constant factor larger than the size of the Pareto front (which is $n+1$ for these problems). In this case, once an objective value of the Pareto front is covered by the population, it remains so for the remaining run of the algorithm. 
This is different when the population size is only equal to the size of the Pareto front. Then such values can be lost, and this effect is strong enough so that for an exponential number of iterations a constant fraction of the Pareto front is not covered~\cite{ZhengLD22}. Nevertheless, also in this case the NSGA\nobreakdash-II computes good approximations of the Pareto front as the first experiments in~\cite{ZhengLD22} show.

This aspect is discussed in more detail in~\cite{ZhengD22gecco}. There it was observed that the main reason for the frequent loss of desirable solutions lies in the fact that the NSGA\nobreakdash-II does not update the crowding distance after each removal, but bases all removals in the selection of the next parent population on the same initial crowding distance values. When using an updated version of the crowding distance, also with smaller population sizes provably good approximations can be computed. 

The most recent work~\cite{BianQ22ppsn} extends~\cite{ZhengLD22} in several directions. (i)~For the NSGA\nobreakdash-II using crossover, runtime guarantees for the \oneminmax, \cocz, and \lotz problems are shown which agree with those in~\cite{ZhengLD22}. (ii)~By assuming that individuals with identical objective value appear in the same or inverse order in the sortings used to compute the crowding distance, the minimum required population size is lowered to $2(n+1)$. (iii)~A stochastic tournament selection is proposed that reduces the runtimes by a factor of $\Theta(n)$ on \lotz and $\Theta(\log n)$ on the other two benchmarks. 

The \oneminmax, \cocz, and \lotz benchmarks are all composed of two unimodal objectives, namely functions isomorphic to the benchmarks \onemax and \leadingones from single-objective EA theory. The theory of MOEA has largely focused on such benchmarks, a benchmark composed of multimodal objectives was only proposed and analyzed in~\cite{DoerrZ21aaai}. 

Besides the definition of the \ojzj problem, the main results in that work are that the SEMO algorithm cannot optimize this benchmark, that the GSEMO takes time $O((n-2k+3) n^k)$ (where the implicit constants can be chosen independent of $n$ and $k$), and that the GSEMO with fast mutation with power-law exponent $\beta>1$ succeeds in time $O((n-2k+3) k^{-k+\beta-0.5} n^k (\frac{n}{n-k})^{n-k})$ (where the implicit constant can be chosen depending on $\beta$ only). A slightly weaker bound (but still much better than that for classic GESMO) was shown for the GSEMO with the stagnation detection mechanism of Rajabi and Witt~\cite{RajabiW20}. Since the implementation of this stagnation detection mechanism in a MOEA is quite technical, we omit the details. 

\section{Preliminaries}\label{sec:prelim}
\subsection{The NSGA-II Algorithm}
We give an overview of the algorithm here and refer to \cite{DebPAM02} for a more detailed description of the general algorithm and to \cite{ZhengLD22} for more details on the particular version of the NSGA\nobreakdash-II we regard.

The algorithm starts with a random initialization of a parent population of size $N$. At each iteration, $N$ children are generated from the parent population via a mutation method, and $N$ individuals among the combined parent and children population survive to the next generation based on their ranks in a non-dominated sorting and, as tie-breaker, the crowding distance.

Ranks are determined recursively. All individuals that are not strictly dominated by any other individual have rank $1$. Given that individuals of rank $1,\dots,i$ are defined, individuals of rank $i+1$ are those only strictly dominated by individuals of rank $i$ or smaller. Clearly, individuals of lower ranks are preferred. 

The crowding distance, denoted by $\cDis(x)$ for an individual~$x$, is used to compare individuals of the same rank. To compute the crowding distances of individuals of rank $i$ with respect to a given objective function $f_j$, we first sort the individuals in ascending order according to their $f_j$ objective values. The first and last individuals in the sorted list have infinite crowding distance. For the other individuals, their crowding distance is the difference between the objective values of its left and right neighbors in the sorted list, normalized by the difference of the minimum and maximum values. The final crowding distance of an individual is the sum of its crowding distances with respect to each objective function.

At each iteration, the critical rank $i^*$ is the rank such that if we take all individuals of ranks smaller than $i^*$, the total number of individuals will be less than or equal to $N$, but if we also take all individuals of rank $i^*$, the total number of individuals will be over $N$. Thus, all individuals of rank smaller than $i^*$ survive to the next generation, and for individuals of rank $i^*$, we take the individuals with the highest crowding distance, breaking ties randomly, so that in total exactly $N$ individuals are kept.

\subsection{The \ojzj Benchmark}

The \ojzj benchmark was proposed in~\cite{DoerrZ21aaai} as a bi-objective analogue of the single-objective \jump benchmark~\cite{DrosteJW02}. This single-objective benchmark has greatly supported our understanding how evolutionary algorithms cope with local optima, see, e.g.,~\cite{JansenW02,DoerrLMN17,DangFKKLOSS18,RajabiW20,Doerr21cgajump,Doerr22}. 

Let $n\in\mathbb{N}$ and $k=[2..n/4]$. The bi-objective function $\onejumpzerojump = (f_1, f_2):\{0,1\}^n\rightarrow\mathbb{R}^2$ is defined by
\[f_1(x) = \begin{cases}
    k+|x|_1, & \text{if }|x|_1 \leq n-k\text{ or } x=1^n,\\
    n-|x|_1, & \text{else};
    \end{cases}\]
\[f_2(x) = \begin{cases}
    k+|x|_0, & \text{if }|x|_0 \leq n-k\text{ or } x=0^n,\\
    n-|x|_0, & \text{else}.
    \end{cases}\]
The aim is to maximize both $f_1$ and $f_2$. The first objective is the classical $\jumpnk$ function. It has a valley of low fitness around its optimum, which can be crossed only by flipping the $k$ correct bits, if no solutions of lower fitness are accepted. The second objective is isomorphic to the first, with the roles of zeroes and ones exchanged. 

According to Theorem~$2$ of \cite{DoerrZ21aaai}, the Pareto set of the $\onejumpzerojump$ function is 
\[S^*=\{x \in \{0,1\}^n \mid|x|_1 = [k..n-k]\cup\{0, n\}\},\] 
and the Pareto front $F^*$ is 
\[F^*= \{(a, 2k+n-a)\mid a\in[2k..n]\cup\{k, n+k\}\},\] 
making the size of the front $|F^*| = n-2k+3$.

We define the inner part of the Pareto set by $S_{I}^*=\{x\mid|x|_{1}\in [k..n-k]\}$, the outer part by $S_{O}^*=\{x\mid|x|_{1}\in\{0, n\}\}$, the inner part of the Pareto front by $F_{I}^*=f(S_{I}^*)=\{(a, 2k+n-a)\mid a\in[2k..n]\}$, and the outer part by $F_{O}^*=f(S_{O}^*)=\{(a, 2k+n-a)\mid a\in\{k, n+k\}\}$.
\section{Runtime Analysis for the NSGA-II}\label{sec:runtime}

In this section, we prove our runtime guarantees for the NSGA\nobreakdash-II, first with bit-wise mutation with mutation rate~$\frac 1n$ (Subsection~\ref{ssec:bitwise}), then with fast mutation (Subsection~\ref{ssec:fast}).

The obvious difference to the analysis for \oneminmax in~\cite{ZhengLD22} is that with  \ojzj, individuals with between 1 and $k-1$ zeroes or ones are not optimal. Moreover, all these individuals have a very low fitness in both objectives. Consequently, such individuals usually will not survive into the next generation, which means that the NSGA\nobreakdash-II at some point will have to generate the all-ones string from a solution with at least $k$ zeroes (unless we are extremely lucky in the initialization of the population). This difference is the reason for the larger runtimes and the advantage of the fast mutation operator. 

A second, smaller difference which however cannot be ignored in the mathematical proofs is that the very early populations of a run of the algorithm may contain zero individual on the Pareto front. This problem had to be solved also in the analysis of \lotz in~\cite{ZhengLD22}, but the solution developed there required that the population size is at least $5$ times the size of the Pareto front (when tournament selection was used). For \ojzj, we found a different argument to cope with this situation that only requires $N\ge 9$.

We start with a few general observations that apply to both mutation methods. A crucial observation, analogous to a similar statement in~\cite{ZhengLD22}, is that with sufficient population size, objective values of rank-$1$ individuals always survive to the next generation.

\begin{lemma}\label{lem:keeprank1} Consider one iteration of the NSGA\nobreakdash-II algorithm optimizing the \onejumpzerojump benchmark, with population size $N\geq 4(n-2k+3)$. If in some iteration $t$ the combined parent and offspring population $R_t$ contains an individual $x$ of rank~$1$, then the next parent population $P_{t+1}$ contains an individual $y$ such that $f(y) = f(x)$. Moreover, if an objective value on the Pareto front appears in $R_t$, it will be kept in all future iterations.
\end{lemma}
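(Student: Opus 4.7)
The plan is to split on whether the first non-dominated front $F_1\subseteq R_t$ (the rank-$1$ individuals of $R_t$) fits into the next parent population. If $|F_1|\le N$, then every rank-$1$ individual, in particular $x$, is kept automatically and we are done. Hence the interesting case is $|F_1|>N$, where NSGA\nobreakdash-II must pick $N$ individuals of $F_1$ by crowding distance, and we must argue that at least one individual sharing the objective value $f(x)$ is selected.

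The first ingredient is an antichain bound. The image of \ojzj consists of the two outer Pareto points $(k,k+n)$ and $(k+n,k)$, the $n-2k+1$ inner Pareto points $(k+i,k+n-i)$ for $i\in[k..n-k]$, and the two non-Pareto chains $\{(k+i,i):i\in[1..k-1]\}$ and $\{(i,k+i):i\in[1..k-1]\}$. A direct case check shows that every non-Pareto value is dominated by every inner Pareto point and by at least one of the two outer Pareto points, and that each of the two non-Pareto families is itself totally ordered; consequently every antichain in this image has size at most $|F^*|=n-2k+3$. Thus the distinct objective values appearing in $F_1$ partition it into at most $n-2k+3$ non-empty \emph{blocks}. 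Because in \ojzj each $f_1$-value (and each $f_2$-value) uniquely determines the objective vector, these blocks coincide with the blocks of equal $f_1$-value and with the blocks of equal $f_2$-value.

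The technical heart of the argument---and, I expect, the place where one must be most careful---is the crowding-distance bookkeeping. Within each block $B$ its members are consecutive in the $f_1$-sort, so any strictly interior member of $B$ has both neighbours inside $B$ and thus contributes $0$ from $f_1$; only the first and the last member of $B$ in that sort contribute a strictly positive amount, or $\infty$ if $B$ happens to be the extreme block. The symmetric statement for $f_2$ yields a further (possibly disjoint) pair, so per block at most four individuals carry a non-zero $\cDis$-value. Summing over the at most $n-2k+3$ blocks gives at most $4(n-2k+3)\le N$ individuals of $F_1$ with positive crowding distance, hence in the case $|F_1|>N$ all of them survive into $P_{t+1}$. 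Moreover each block $B$ contains at least one such individual, namely its first member in the $f_1$-sort, whose $f_1$-contribution is always positive or infinite. Applying this to the block containing $x$ produces the desired $y\in P_{t+1}$ with $f(y)=f(x)$. The ``moreover'' claim is then immediate by induction on $t'\ge t$: any solution whose objective value lies on $F^*$ is Pareto optimal, and is therefore of rank $1$ in every population containing it, so once such a value has appeared in some $R_{t'}$ the first part propagates it to $P_{t'+1}\subseteq R_{t'+1}$ and thereby to all subsequent iterations.
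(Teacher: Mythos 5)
Your proof is correct and takes essentially the same route as the paper's: bound the number of distinct objective values among rank-$1$ individuals by $n-2k+3$ (the paper cites Corollary~6 of the \ojzj paper where you derive the antichain bound directly), observe that at most four individuals per objective value have positive crowding distance while at least one (the first of its block in the $f_1$-sort) always does, and conclude that all at most $4(n-2k+3)\le N$ positive-crowding-distance individuals survive. The only differences are presentational, namely your explicit case split on $|F_1|\le N$ and your self-contained verification of the antichain bound.
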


\begin{proof} Let $F_1$ be the set of rank-$1$ individuals in $R_t$. To prove the first claim, we need to show that for each $x\in F_1$, there is  a $y\in P_{t+1}$ such that $f(x)=f(y)$. Let $S_{1.1},\dots,S_{1.|F_1|}$ be the list of individuals in $F_1$ sorted by ascending $f_1$ values and $S_{2.1}, \dots ,S_{2.|F_1|}$ be the list of individuals sorted by ascending $f_2$ values, which were used to compute the crowding distances. Then there exist $a\leq b$  and $a'\leq b'$ such that $[a..b] = \{i \mid f_1(S_{1.i})=f_1(x)\}$ and $[a'..b'] = \{i \mid f_2(S_{2.i})=f_2(x)\}$. If any one of $a=1$, $a'=1$, $b=|F_1|$, or $b'=|F_1|$ is true, then there is an individual $y\in F_1$ satisfying $f(y)=f(x)$ of infinite crowding distance. Since there are at most $4 < N$ individuals of infinite crowding distance, $y$ is kept in $P_{t+1}$. So consider the case that $a, a' > 1$ and $b, b' < |F_1|$. By the definition of the crowding distance, we have that $\cDis(S_{1.a})\geq \frac{f_1(S_{1.a+1}) -  f_1(S_{1.a-1})}{f_1(S_{1.|F_1|})-f_1(S_{1.1})}\geq\frac{f_1(S_{1.a}) -  f_1(S_{1.a-1})}{f_1(S_{1.|F_1|})-f_1(S_{1.1})}$. Since $f_1(S_{1.a}) -  f_1(S_{1.a-1}) > 0$ by the definition of $a$, we have $\cDis(S_{1.a}) > 0$. Similarly, we have $\cDis(S_{1.a'}), \cDis(S_{1.b}), \cDis(S_{1.b'}) > 0$. For $i \in [a+1..b-1]$ and $S_{1.i}=S_{2.j}$ for some $j \in [a'+1..b'-1]$, we have that $f_1(S_{1.i-1})=f_1(x)=f_1(S_{1.i+1})$ and $f_2(S_{2.j-1})=f_2(x)=f_2(S_{2.j+1})$. So $\cDis(S_{1.i}) = 0$. Therefore, for each $f(x)$ value, there are at most $4$ individuals with the same objective value and positive crowding distances. By Corollary~$6$ in \cite{DoerrZ21aaai}, $|F_1|\leq n-2k+3$. So the number of rank-$1$ individuals with positive crowding distances is at most $4(n-2k+3)\leq N$ and therefore they will all be kept in $P_{t+1}$. 

The second claim then follows since if $x\in R_t$ and $f(x)$ is on the Pareto front, we have $x\in F_1$. By the first claim, $x\in P_{t+1}$ and therefore $x\in R_{t+1}$. The same reasoning applies for all future iterations.
\end{proof}

For our analysis, we divide a run of the NSGA\nobreakdash-II algorithm optimizing the \onejumpzerojump benchmark into the following stages.
\begin{itemize}
    \item \emph{Stage 1}: $P_t \cap S_{I}^* = \emptyset$. In this stage, the algorithm tries to find the first individual with objective value in $F_{I}^*$.
    \item \emph{Stage 2}: There exists a $v \in F_{I}^*$ such that $v \notin f(P_t)$. In this stage, the algorithm tries to cover the entire set $F_{I}^*$.
    \item \emph{Stage 3}: $F_{I}^* \subseteq f(P_t)$, but $F_{O}^* \nsubseteq f(P_t)$. In this stage, the algorithm tries to find the extremal values of the Pareto front.
\end{itemize}
By Lemma~\ref{lem:keeprank1}, once the algorithm has entered a later stage, it will not go back to an earlier stage. Thus, we can estimate the expected number of iterations needed by the NSGA\nobreakdash-II algorithm by separately analyzing each stage.

A mutation method studied in \cite{ZhengLD22} is to flip one bit selected uniformly at random. For reasons of completeness, we prove in the following lemma the natural result that the NSGA\nobreakdash-II with this mutation operator with high probability is not able to cover the full Pareto front of the \onejumpzerojump benchmark. 

\begin{lemma}\label{lem:1bit}
With probability $1 - N \exp(-\Omega(n))$, the NSGA\nobreakdash-II algorithm using one-bit flips as mutation operator does not find the full Pareto front of the \onejumpzerojump benchmark, regardless of the runtime.
\end{lemma}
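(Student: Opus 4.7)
The plan is to show that, with probability $1 - N\exp(-\Omega(n))$, a simple invariant is preserved forever: every individual in every parent population $P_t$ has Hamming weight in $[k..n-k]$. This invariant immediately implies the claim, because one-bit flips change $|x|_1$ by exactly one, so the extremal strings $0^n$ and $1^n$ can never appear, and these are the unique preimages of the extremal Pareto-front points $(k,n+k)$ and $(n+k,k)$ (inspection of $f_1$ and $f_2$ shows that $f_1(x)=n+k$ forces $|x|_1=n$ and symmetrically for $f_2$).

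First I would handle the initialization. By a Chernoff bound on $\Bin(n,\tfrac{1}{2})$ together with $k\le n/4$, a uniformly random bit string has Hamming weight outside $[k..n-k]$ with probability $\exp(-\Omega(n))$, and a union bound over the $N$ initial individuals yields a failure probability of $N\exp(-\Omega(n))$. Condition on this event, so that the invariant holds at time~$0$.

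I would then prove the invariant by induction on~$t$. Assuming $P_t$ consists only of individuals with $|x|_1\in[k..n-k]$, each such individual lies on the Pareto set, so the $N$ parents already provide $N$ rank-$1$ members of the combined population $R_t$. A one-bit mutation keeps $|x|_1$ inside $[k..n-k]$ unless it flips a $1$ in a parent with $|x|_1=k$ or a $0$ in a parent with $|x|_1=n-k$. In the first case the offspring $x'$ has $f(x')=(2k-1,k-1)$, which is strictly dominated by its parent's image $(2k,n)$ (using $k\ge 2$ and $n\ge 4k$); the second case is symmetric. Hence any offspring leaving the band has rank at least $2$ in $R_t$ and cannot be chosen by NSGA-II selection, since $N$ rank-$1$ individuals are already available. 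Therefore $P_{t+1}$ again has the desired property, closing the induction.

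The step I expect to require the most care is the rank-based bookkeeping in the induction: one must confirm that every offspring with Hamming weight outside $[k..n-k]$ is strictly dominated by some individual present in $R_t$ (the parent it came from suffices), and that the $N$ rank-$1$ parents fill $P_{t+1}$ before any such offspring can be picked. The Chernoff estimate at initialization and the final impossibility conclusion from the invariant are essentially routine once the invariant has been set up.
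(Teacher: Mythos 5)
Your proof is correct and takes essentially the same route as the paper's: a Chernoff bound plus a union bound shows $P_0\subseteq S_I^*$ with probability $1-N\exp(-\Omega(n))$, and an induction using the fact that offspring leaving the weight band $[k..n-k]$ are strictly dominated by their parents (hence have rank at least $2$ and are never selected) shows the population stays on the Pareto set, so one-bit flips can never produce $0^n$ or $1^n$. The only cosmetic difference is that you maintain the slightly stronger (and arguably cleaner) invariant $P_t\subseteq S_I^*$, whereas the paper phrases the induction in terms of the population consisting only of Pareto-optimal points and then observes that a weight-one individual, the only possible one-bit-flip parent of $0^n$, can never enter the population.
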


\begin{proof}
Since $k \le n/4$, a simple Chernoff bound argument shows that a random initial individual is in $S_I^*$ with probability $1 - \exp(-\Omega(n))$. By a union bound, we have $P_0 \subseteq S_{I}^*$ with probability $1 - N \exp(-\Omega(n))$. We argue that in this case, the algorithm can never find an individual in $S_O^*$. 

We observe that any individual in $S^*_I$ strictly dominates any individual in the gap regions of the two objectives, that is, with between $1$ and $k-1$ zeroes or ones. Consequently, in any population containing at least one individual from $S^*_I$, such a gap individual can never have rank~$1$, and the only rank~$1$ individuals are those on the Pareto front. Hence if $P_t$ for some iteration $t$ contains only individuals on the Pareto front, $P_{t+1}$ will do so as well. 

By induction and our assumption $P_0 \subseteq S^*_I$, we see that the parent population will never contain an individual with exactly one one-bit. Since only from such a parent the all-zeroes string can be generated (via one-bit mutation), we will never have the all-zeroes string in the population. 
\end{proof}

In the light of Lemma~\ref{lem:1bit}, the one-bit flip mutation operator is not suitable for the optimization of \ojzj. We therefore do not consider this operator in the following runtime analyses.

\subsection{Runtime Analysis for the NSGA-II Using Bit-Wise Mutation}\label{ssec:bitwise}

In this section, we analyze the complexity of the NSGA\nobreakdash-II algorithm when mutating each bit of each selected parent with probability $\frac{1}{n}$. We consider four different ways of selecting the parents for mutation: (i)~fair selection (selecting each parent once), (ii)~uniform selection (selecting one parent uniformly at random for $N$ times), (iii)~via $N$ independent tournaments (for $N$ times, uniformly at random sample $2$ different parents and conduct a binary tournament between the two, i.e., select the one with the lower rank and, in case of tie, select the one with the larger crowding distance, and, in case of tie, select one randomly), and (iv)~via a two-permutation tournament scheme (generate two random permutations $\pi_{1}$ and $\pi_{2}$ of $P_t$ and conduct a binary tournament between $\pi_{j}(2i-1)$ and $\pi_{j}(2i)$ for all $i\in[1..N/2]$ and $j\in\{1,2\}$; this is the selection method used in Deb's implementation of the NSGA\nobreakdash-II when ignoring crossover \cite{DebPAM02}).

\begin{lemma}\label{lem:stage1} Using $N\ge 9$, bit-wise mutation for variation, and any parent selection method, stage $1$ needs in expectation at most $e(\frac{4k}{3})^k$ iterations.
\end{lemma}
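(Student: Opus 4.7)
The plan is to derive a lower bound, uniform over parents and over the four parent-selection rules, on the probability that a single offspring lies in $S_I^*$, and then to close the argument with a geometric waiting-time bound. First I would observe that throughout Stage 1 every $x \in P_t$ has $|x|_1 \in [0..k-1] \cup [n-k+1..n]$, and by the bit-complement symmetry of \ojzj it suffices to analyze parents with $|x|_1 = j \in [0..k-1]$. For such a parent, flipping exactly $k-j$ specified zero-bits and leaving all other bits unchanged produces an offspring of Hamming weight exactly $k$, which lies in $S_I^*$. This yields the lower bound
\[
q_j \;=\; \binom{n-j}{k-j}\!\left(\frac{1}{n}\right)^{k-j}\!\left(1-\frac{1}{n}\right)^{n-k+j}
\]
on the probability that the offspring belongs to $S_I^*$, and a short computation of the ratio $q_{j+1}/q_j = (k-j)(n-1)/(n-j)$ shows that $q_j$ is non-decreasing on $[0..k-1]$ as long as $k \geq 2$, so the worst case is $j = 0$.

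Next I would bound $q_0 = \binom{n}{k}(1/n)^k(1-1/n)^{n-k}$ from below by combining three elementary inequalities: $\binom{n}{k} \geq (n-k+1)^k/k! \geq (3n/4)^k/k!$, valid because $k \leq n/4$; the standard estimate $(1-1/n)^{n-1} \geq 1/e$, which implies $(1-1/n)^{n-k} \geq 1/e$ for every $k \geq 1$; and $k! \leq k^k$. Assembling these gives
\[
q_0 \;\geq\; \frac{(3/4)^k}{e\,k^k} \;=\; \frac{1}{e\,(4k/3)^k} \;=:\; p.
\]

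Because $q_j \geq p$ depends only on the parent's Hamming weight and holds for every parent the NSGA\nobreakdash-II could possibly select during Stage 1, it is insensitive to which of the four selection rules is used. Looking just at the first of the $N$ children generated in a given iteration is therefore enough to conclude that Stage 1 ends in that iteration with probability at least $p$, so the number of iterations spent in Stage 1 is stochastically dominated by a geometric random variable of parameter $p$, whose expectation is $1/p = e(4k/3)^k$. The main point worth getting carefully right is the uniformity of the lower bound across all possible parents; the bit-complement symmetry together with the monotonicity of $q_j$ handles this cleanly, while the hypothesis $N \geq 9$ enters only to ensure that each of the four selection schemes is well defined.
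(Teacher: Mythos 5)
Your lower bound on the probability of generating an offspring in $S_I^*$ is correct and matches the paper's (the paper bounds $\binom{n-i}{k-i}(1/n)^{k-i}(1-1/n)^{n-(k-i)} \ge \frac1e(\frac{3}{4(k-i)})^{k-i} \ge \frac1e(\frac{3}{4k})^k$ directly for each parent weight, where you instead argue monotonicity in $j$ and bound the $j=0$ case; both routes give the same constant). However, there is a genuine gap in the final step. Stage~1 is defined by $P_t \cap S_I^* = \emptyset$, i.e., in terms of the \emph{parent} population, so producing an offspring in $S_I^*$ only places it in the combined population $R_t$; you still have to show that such an individual survives the selection into $P_{t+1}$. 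This is not automatic: Lemma~\ref{lem:keeprank1} is only available under $N \ge 4(n-2k+3)$, which is \emph{not} assumed in this lemma, and your closing remark that ``$N \ge 9$ enters only to ensure that each of the four selection schemes is well defined'' misreads the role of that hypothesis.

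The missing argument, which is exactly where $N \ge 9$ is used in the paper, goes as follows. Once $R_t$ contains an individual of $S_I^*$, every gap individual (with between $1$ and $k-1$ zeroes or ones) is strictly dominated by it, so the rank-$1$ individuals of $R_t$ all lie in $S^*$; the only rank-$1$ individuals outside $S_I^*$ are copies of $0^n$ and $1^n$, and by the counting argument in the proof of Lemma~\ref{lem:keeprank1} at most $4$ copies of each objective value --- hence at most $8$ such individuals in total --- have positive crowding distance. Since at least one $S_I^*$ individual of rank~$1$ has positive crowding distance and survivors are chosen by descending crowding distance, $N \ge 9$ guarantees that at least one member of $S_I^*$ enters $P_{t+1}$, so the iteration indeed ends stage~1. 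Without this piece your geometric waiting-time bound applies to the hitting time of $R_t \cap S_I^* \neq \emptyset$ rather than of $P_t \cap S_I^* \neq \emptyset$, and the lemma as stated is not established.
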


\begin{proof}
Suppose $x$ is selected for mutation during one iteration of stage $1$ and $|x|_1 = i$. Then $i < k$ or $i > n-k$. If $i < k$, then the probability of obtaining an individual with $k$ $1$-bits is at least $\binom{n-i}{k-i}(\frac{1}{n})^{k-i}(1-\frac{1}{n})^{n-(k-i)} \geq (\frac{n-i}{n(k-i)})^{k-i}(1-\frac{1}{n})^{n-1} > \frac{1}{e}(\frac{3}{4(k-i)})^{k-i} \geq \frac{1}{e}(\frac{3}{4k})^k$ (where the second to last inequality uses the assumption that $i < k \leq \frac{n}{4}$). If $i > n-k$, then the probability of obtaining an individual with $n-k$ $1$-bits is at least $\binom{i}{i-(n-k)} (\frac{1}{n})^{i-(n-k)}(1-\frac{1}{n})^{2n-i-k} \geq (\frac{i}{n(i-n+k)})^{i-n+k}(1-\frac{1}{n})^{n-1} > \frac{1}{e}(\frac{3}{4(i-n+k)})^{i-n+k} \geq \frac{1}{e}(\frac{3}{4k})^k$ (where the second to last inequality uses the assumption that $i > n-k \geq \frac{3}{4n}$). Suppose $R_t$ is a combined parent and offspring population where there is at least one individual in $S^*_I$. Then the rank-$1$ individuals of $R_t$ are all in $S^*$. By the proof of Lemma~\ref{lem:keeprank1}, there are at most $8$ rank-$1$ individuals in $R_t$ that are not in $S^*_I$ and have positive crowding distance (4 copies of the all-zeroes string and 4 of the all-ones string). Since $N\ge 9$, at least one individual in $S^*_I$ survives to the next generation. Hence each iteration with probability at least $\frac{1}{e}(\frac{3}{4k})^k$ marks the end of stage $1$. Consequently, stage $1$ ends after in expectation at most $(\frac{1}{e}(\frac{3}{4k})^k)^{-1}=e(\frac{4k}{3})^k$ iterations.
\end{proof}

For the remaining two stages, we first regard the technically easier fair and uniform selection methods. 

\begin{lemma}\label{lem:stage2} Using population size $N\geq 4(n-2k+3)$, selecting parents using fair or uniform selection, and using bit-wise mutation for variation, stage $2$ needs in expectation $O(n\log n)$ iterations.
\end{lemma}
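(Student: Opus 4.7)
The plan is to bound the expected length of stage 2 by $O(n\log n)$ by combining the persistence guaranteed by Lemma~\ref{lem:keeprank1} with a coupling of the coverage process to a sum of geometric waiting times corresponding to single-bit extensions of a sweeping frontier. I introduce the bookkeeping set $C_t := \{i\in[k..n-k] : \exists x\in P_t,\ |x|_1 = i\}$ of inner-front counts covered at generation $t$. By Lemma~\ref{lem:keeprank1}, $C_t$ is monotone non-decreasing and for every $j\in C_t$ an individual with exactly $j$ ones is present in $P_t$. Since the start of stage 2 requires $C_0\neq\emptyset$, I fix any $i^*\in C_0$ and separately analyze the expected times to extend coverage to $n-k$ (right) and down to $k$ (left).

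Next I would record the one-step extension probability. Under fair selection, the parent with $|x|_1 = j$ is mutated exactly once per iteration, and the probability that bit-wise mutation flips exactly one of the $n-j$ zero-bits and no other bit, yielding a child $y$ with $|y|_1 = j+1$, is $(n-j)\cdot\tfrac{1}{n}\cdot(1-\tfrac{1}{n})^{n-1}\ge\tfrac{n-j}{en}$; symmetrically, a single one-bit flip produces $|y|_1 = j-1$ with probability at least $\tfrac{j}{en}$. Under uniform selection, the probability that at least one of the $N$ independent draws picks this specific parent \emph{and} performs the desired mutation is $1-(1-p/N)^N \ge p/2$ for the relevant $p\le 1$, so only a constant factor is lost.

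The heart of the argument is a sequential chain argument for the right extension (the left case is symmetric). I would define waiting times $W_1,\dots,W_{n-k-i^*}$, where $W_m$ starts being counted at iteration $T'_{m-1}:=\sum_{\ell<m}W_\ell$ and equals the number of iterations until the parent with $|x|_1 = i^*+m-1$ produces a child $y$ with $|y|_1 = i^*+m$ by flipping exactly one zero-bit. Inductively, $T_{i^*+m-1}\le T'_{m-1}$, so by Lemma~\ref{lem:keeprank1} this parent is in $P_t$ for every $t\ge T'_{m-1}$, whence $W_m$ is stochastically dominated by a geometric random variable with success probability $p_m \ge (n-i^*-m+1)/(en)$. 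Crucially, each successful event covers \emph{exactly} the value $i^*+m$, so by iteration $T'_{n-k-i^*}$ the entire interval $[i^*..n-k]$ is covered. By the tower property,
\[
E[T'_{n-k-i^*}] \;\leq\; \sum_{m=1}^{n-k-i^*}\frac{en}{n-i^*-m+1} \;=\; en\sum_{\ell=k+1}^{n-i^*}\frac{1}{\ell} \;\leq\; en\ln(n/k) \;=\; O(n\log n).
\]
A symmetric chain covers $[k..i^*]$ in the same expected time, and the sum of the two bounds is still $O(n\log n)$.

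The step I expect to be most delicate is the coupling. In the true process, multi-bit mutations or mutations of other parents can advance the frontier by more than one in a single iteration, leaving gaps in $C_t$, so a plain ``frontier-reaches-boundary'' analysis via multiplicative drift on $\min C_t$ and $\max C_t$ alone would not automatically guarantee coverage of every intermediate value. Tying each $W_m$ to the specific single-bit event of the unique ``next'' frontier parent sidesteps this obstruction: each successful chain step covers exactly the next integer in the inner front, so no value can be skipped, and any other mutation in the real process only lowers the time to coverage. A secondary bookkeeping point is uniform selection, where the $N$-draw probability estimate introduces only a constant factor that is absorbed into the $O(\cdot)$.
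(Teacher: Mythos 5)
Your proposal is correct and follows essentially the same route as the paper: both arguments use Lemma~\ref{lem:keeprank1} to guarantee that each covered inner value persists, then bound the time to sweep the frontier outward from the first covered value $v$ (your $i^*$) as a sum of geometric waiting times with success probabilities $p\cdot p_i^{\pm} \ge \Omega(1)\cdot\min\{i,n-i\}/(en)$, yielding a harmonic sum of order $O(n\log n)$. Your extra care about the coupling and about gaps in $C_t$ is sound but not needed beyond what the paper's sequential bound already implicitly handles, since any out-of-order coverage only shortens the waiting times.
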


The proof, naturally, is very similar to the corresponding part of the analysis on \oneminmax~\cite{ZhengLD22}. 

\begin{proof}[Proof of Lemma~\ref{lem:stage2}] Let $x\in P_t$ be an individual such that $|x|_1=v\in [k..n-k]$. Denote the probability that $x$ is selected as the parent at least once by $p$, the probability that the result of mutating $x$ gives us a $y$ such that $|y|_1 = v+1$ by $p_{v}^{+}$ and the probability that the result of mutating $x$ gives us a $y$ such that $|y|_1 = v-1$ by $p_{v}^{-}$. Then, since by Lemma~\ref{lem:keeprank1}, $f(x)$ is kept in $f(P_t)$ for all iterations after it first appears, the expected number of iterations needed to obtain a $y$ such that $|y|_1 = v+1$ is at most $\frac{1}{pp_{v}^{+}}$. Similarly, the expected number of iterations needed to obtain a $y$ such that $|y|_1 = v-1$ is at most $\frac{1}{pp_{v}^{-}}$.

Consider one iteration $t$ of stage $2$. We know that there is $x\in P_t$ such that $|x|_1=v\in [k..n-k]$. Then the expected number of iterations needed to obtain objective values $(k+v+1, n+k-v-1), (k+v+2, n+k-v-2),\dots,(n, 2k)$ is at most $\sum_{i=v}^{n-k} \frac{1}{pp_{i}^{+}}$. Similarly, the expected number of iterations needed to obtain objective values $(k+v-1, n+k-v+1), (k+v-2, n+k-v+2),\dots,(2k, n)$ is at most $\sum_{i=k}^{v} \frac{1}{pp_{i}^{-}}$. As a result, the number of iterations needed to cover $F_{I}^*$ is at most $\sum_{i=v}^{n-k} \frac{1}{pp_{i}^{+}} + \sum_{i=k}^{v} \frac{1}{pp_{i}^{-}}$. With fair selection, we have $p=1$, and with uniform selection, $p = 1 - (1-\frac{1}{N})^N \geq 1-\frac{1}{e}$. Flipping each bit with probability~$\frac{1}{n}$, we have $p_{v}^{+} = \frac{n-v}{n}(1-\frac{1}{n})^{n-1} \geq \frac{n-v}{en}$ and $p_{v}^{-} = \frac{v}{n}(1-\frac{1}{n})^{n-1} \geq \frac{v}{en}$. So the expected number of iterations is at most $\frac{ne^2}{e-1}(\sum_{i=k}^{n-v} \frac{1}{i} + \sum_{i=k}^{v}\frac{1}{i})= O(n\log n)$.
\end{proof}

Different arguments, naturally, are needed in the following analysis of stage~3. 

\begin{lemma}\label{lem:stage3} Using population size $N\geq 4(n-2k+3)$ and bit-wise mutation for variation, stage $3$ needs in expectation at most $2en^k$ iterations if selecting parents using fair selection, and $2\frac{e^2}{e-1}n^k$ iterations if using uniform selection.
\end{lemma}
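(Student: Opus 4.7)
The plan is to exploit Lemma~\ref{lem:keeprank1}: throughout stage~3 every objective value in $F_{I}^*$ persists in $f(P_t)$. In particular, the values $(2k, n)$ and $(n, 2k)$ are always represented, forcing the population to always contain an individual $x_k$ with $|x_k|_1 = k$ and an individual $x_{n-k}$ with $|x_{n-k}|_1 = n-k$. The only Pareto-optimal objective values still missing are $(k, n+k)$ and $(n+k, k)$, attained uniquely by $0^n$ and $1^n$. So the task reduces to bounding the expected number of iterations until $x_k$ mutates to $0^n$ and until $x_{n-k}$ mutates to $1^n$.

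A single bit-wise mutation of $x_k$ produces $0^n$ with probability $(1/n)^k (1-1/n)^{n-k} \ge 1/(en^k)$, since one must flip precisely the $k$ one-bits and no zero-bit. The symmetric bound holds for the transition $x_{n-k} \to 1^n$.

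For fair selection, $x_k$ is chosen as a parent exactly once per iteration, so $0^n$ is generated in each iteration with probability at least $1/(en^k)$, giving an expected waiting time of at most $en^k$ iterations. The identical bound applies to $1^n$, and summing the two waiting times by linearity gives a total expected stage-3 length of at most $2 e n^k$.

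For uniform selection, the probability that at least one of the $N$ i.i.d.\ uniform draws picks $x_k$ is $1 - (1 - 1/N)^N \ge (e-1)/e$; conditionally on this, at least one of the resulting mutations yields $0^n$ with probability $\ge 1/(en^k)$. Multiplying gives per-iteration success probability $\ge (e-1)/(e^2 n^k)$ and hence expected waiting time $\le e^2 n^k / (e-1)$ for $0^n$; the symmetric bound for $1^n$ sums to $2 e^2 n^k/(e-1)$, matching the claim. There is no serious obstacle — the main subtlety is only that Lemma~\ref{lem:keeprank1} preserves \emph{objective values} rather than specific individuals, but this suffices since on the inner Pareto front $|x|_1$ is determined by $f(x)$, so representatives $x_k$ and $x_{n-k}$ (not necessarily the same individuals across iterations) are always available.
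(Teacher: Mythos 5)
Your proof is correct and follows essentially the same route as the paper's: identify the persistent individuals with $k$ ones and $n-k$ ones via Lemma~\ref{lem:keeprank1}, bound the per-iteration probability of generating $0^n$ (resp.\ $1^n$) as the product of the selection probability and the probability $(1/n)^k(1-1/n)^{n-k}\ge 1/(en^k)$ of flipping exactly the right $k$ bits, and sum the two expected waiting times. Your remark that Lemma~\ref{lem:keeprank1} preserves objective values rather than specific individuals (which suffices since $|x|_1$ is determined by $f(x)$ on the inner front) is a small but welcome clarification of a point the paper's proof states more loosely.
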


\begin{proof}
Consider one iteration $t$ of stage $3$. We know that there is an $x\in P_t$ such that $|x|_1 = k$. Denote the probability that $x$ is selected at least once to be mutated in this iteration by $p_1$. Conditioning on $x$ being selected, denote the probability that all $k$ $1$-bits of $x$ are flipped in this iteration by $p_2$. Then the probability of generating $0^n$ in this iteration is at least $p_{1}p_{2}$. Since by Lemma~\ref{lem:keeprank1}, $x$ is kept for all future generations, we need at most $\frac{1}{p_{1}p_{2}}$ iterations to obtain $0^n$. With fair selection, we have $p_1 = 1$ and with uniform selection, $p_1 = 1-(1-\frac{1}{N})^N \geq 1 - \frac{1}{e}$. On the other hand, $p_2 = (\frac{1}{n})^{k}(1-\frac{1}{n})^{n-k}\geq \frac{1}{en^k}$. So the expected number $\frac{1}{p_{1}p_{2}}$ of iterations to obtain $0^n$ is bounded by $en^k$ if using fair selection, and by $\frac{e^2}{e-1}n^k$ if using uniform selection. The case for obtaining $1^n$ is symmetrical. Therefore, the expected total number of iterations needed to cover the extremal values of the Pareto front is at most $2en^k$ if using fair selection, and $2\frac{e^2}{e-1}n^k$ if using uniform selection.
\end{proof}

Combining the lemmas, we immediately obtain the runtime guarantee.
\begin{theorem}\label{thm:basic}
Using population size $N \geq 4(n-2k+3)$, selecting parents using fair or uniform selection, and mutating using bit-wise mutation, the NSGA\nobreakdash-II needs in expectation at most $(1+o(1))KNn^k$ fitness evaluations to cover the entire Pareto front of the \onejumpzerojump benchmark, where $K=2e$ for fair selection and $K=2\frac{e^2}{e-1}$ for uniform selection.
\end{theorem}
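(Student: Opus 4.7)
The plan is to assemble Lemmas~\ref{lem:stage1}, \ref{lem:stage2}, and \ref{lem:stage3} using the monotonicity provided by Lemma~\ref{lem:keeprank1}. Because any Pareto-optimal objective value, once present in some combined population $R_t$, remains in every subsequent parent population, the three end-of-stage events (first entry into $S_I^*$; full coverage of $F_I^*$; full coverage of $F_O^*$) occur in this order and are never undone. Applying the tower property of conditional expectation to the stage transitions, the expected number of iterations to cover the full Pareto front is at most the sum of the stage-wise expected durations,
\[e\!\left(\tfrac{4k}{3}\right)^{k} + O(n \log n) + K_{0}\, n^{k},\]
with $K_{0}=2e$ for fair selection and $K_{0}=2e^{2}/(e-1)$ for uniform selection.

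Next I would check that stage~3 dominates. Since $2 \le k \le n/4$, we have $(4k/3)^{k}/n^{k} = (4k/(3n))^{k} \le (1/3)^{k}$; either $k$ is bounded by a constant, in which case $(4k/3)^{k}$ is a constant while $n^{k}\to\infty$, or $k\to\infty$, in which case $(1/3)^{k}\to 0$. In both cases the stage-1 term is $o(n^{k})$. The stage-2 term $O(n\log n)$ is also $o(n^{k})$ because $k\ge 2$. Both lower-order terms are therefore absorbed into a $(1+o(1))$ factor in front of $K_{0}n^{k}$.

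Converting iterations into fitness evaluations is the last routine step: initialization costs $N$ evaluations and each subsequent iteration evaluates the $N$ newly generated offspring. The total expected number of fitness evaluations is thus at most $N + N\cdot(1+o(1))K_{0}n^{k} = (1+o(1))K_{0}Nn^{k}$, matching the stated bound with $K=K_{0}$. One small consistency check is that the hypothesis $N\ge 9$ of Lemma~\ref{lem:stage1} is implied by the theorem's assumption $N\ge 4(n-2k+3)$: since $2\le k\le n/4$ forces $n\ge 8$ and hence $n-2k+3\ge n/2+3\ge 7$, we get $N\ge 28$.

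No real obstacle remains, since the probabilistic heavy lifting has been done in the three stage lemmas and in Lemma~\ref{lem:keeprank1}. The only subtlety worth spelling out is that the expected durations may be summed because each stage lemma bounds the expected remaining time to leave that stage \emph{uniformly} over the configuration at stage entry, so the tower expectation gives the sum without any loss in constants.
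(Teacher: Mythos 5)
Your proposal is correct and follows exactly the route the paper intends: the paper's proof of this theorem is the single sentence ``Combining the lemmas, we immediately obtain the runtime guarantee,'' relying on the stage decomposition and the monotonicity from Lemma~\ref{lem:keeprank1} established just before. You have merely made explicit the routine steps the paper leaves implicit (the lower-order terms being $o(n^k)$, the conversion to $N$ fitness evaluations per iteration, and the check that $N\ge 4(n-2k+3)$ implies $N\ge 9$), all of which are verified correctly.
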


In the above result, we have given explicit values for the leading constant~$K$ to show that it is not excessively large, but we have not tried to optimize this constant. In fact, it is easy to see that the $2$ could be replaced by $1.5$ by taking into account that the expected time to find the first extremal point is only half the time to find a particular extremal point. Since we have no non-trivial lower bounds at the moment, we find it too early to optimize the constants.

We now turn to the case where the mutating parents are chosen using one of two ways of \emph{binary tournaments}, namely, via $N$ independent tournaments and the two-permutation tournament scheme.

\begin{theorem}\label{thm:bin}
Using population size $N \geq 4(n-2k+3)$, selecting parents using $N$ independent tournaments or the two-permutation tournament scheme, and mutating using bit-wise mutation, the NSGA\nobreakdash-II takes in expectation at most $(1+o(1))KNn^k$ fitness evaluations to cover the entire Pareto front of the \onejumpzerojump benchmark, where $K=2\frac{e^2}{e-1}$ if using $N$ independent tournaments, and $K=\frac{8}{3}e$ if using the two-permutation tournament scheme.
\end{theorem}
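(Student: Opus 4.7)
The plan is to mirror the three-stage analysis used for Theorem~\ref{thm:basic}, keeping Lemma~\ref{lem:stage1} as is (its proof does not depend on the selection method), and re-deriving the analogues of Lemmas~\ref{lem:stage2} and~\ref{lem:stage3} with the probability $p$ (respectively $p_1$) that a fixed Pareto-optimal individual $x\in P_t$ is selected at least once in an iteration replaced by a scheme-specific lower bound. All mutation-related quantities $p_v^+$, $p_v^-$, and $p_2$ are unchanged, since bit-wise mutation is still used.

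First I would bound the selection probability $p$ of a fixed $x\in P_t$ with $f(x)$ on the Pareto front and positive crowding distance. Because $x$ has rank~$1$, it deterministically wins any tournament against an opponent of higher rank or of rank~$1$ with smaller crowding distance, and it wins the random tie-break with probability at least $\tfrac{1}{2}$ in every remaining case; hence $x$ wins any single tournament it enters with probability at least $\tfrac{1}{2}$. For the scheme of $N$ independent tournaments, $x$ is one of the two sampled individuals with probability $2/N$, so its per-tournament win probability is at least $1/N$, and by independence $p \ge 1-(1-1/N)^N \ge 1-1/e$. For the two-permutation scheme, $x$ appears in exactly one binary tournament per permutation and therefore participates in exactly two tournaments, each against a uniformly random distinct opponent; so $p \ge 1-(1/2)^2 = 3/4$.

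Second, I would plug these bounds into the arguments of Lemmas~\ref{lem:stage2} and~\ref{lem:stage3}. Stage~$2$ still yields $O(n\log n)$ iterations and is asymptotically negligible. For stage~$3$, the expected number of iterations needed to generate each of the two extremal strings is at most $1/(p\cdot p_2) \le (1/p)\,en^k$; with $p \ge 1-1/e$ this gives $\tfrac{e^2}{e-1}n^k$ per extremal for the $N$ independent tournaments, and with $p \ge 3/4$ it gives $\tfrac{4e}{3}n^k$ per extremal for the two-permutation scheme. Doubling to cover both extremes, adding the lower-order stage contributions, and multiplying by the $N$ fitness evaluations per iteration produces the claimed $(1+o(1))KNn^k$ with $K=2\tfrac{e^2}{e-1}$ and $K=\tfrac{8e}{3}$, respectively.

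The main obstacle will be justifying the $\tfrac{1}{2}$ tie-break bound uniformly across iterations and opponents. A careful reading of the crowding-distance computation (as in the proof of Lemma~\ref{lem:keeprank1}) shows that among individuals sharing an objective value with $x$, at most four enter the selection with positive crowding distance, and all individuals of rank larger than~$1$ are strictly dominated by $x$; thus the worst case is a rank-$1$ opponent whose crowding distance coincidentally equals that of $x$, where the random tie-break still leaves $x$ a $\tfrac{1}{2}$ selection chance. Once this worst-case observation is established, the remainder of the proof is a direct reuse of the calculations already carried out for the fair and uniform selection schemes in Theorem~\ref{thm:basic}.
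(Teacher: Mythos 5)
Your overall architecture matches the paper's proof: reuse Lemma~\ref{lem:stage1} verbatim, and redo stages~2 and~3 with scheme-specific selection probabilities while keeping the mutation probabilities from Lemmas~\ref{lem:stage2} and~\ref{lem:stage3}. Your stage-3 argument is essentially the paper's: there the relevant individual $x$ with $|x|_1=k$ (while $0^n$ is still missing) is extremal in the $f_2$-sorting among rank-$1$ individuals, hence has \emph{infinite} crowding distance, so it can at worst tie and the $\tfrac12$ per-tournament win probability is legitimate; this yields exactly the constants $K=2\tfrac{e^2}{e-1}$ and $K=\tfrac{8}{3}e$.

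The gap is in stage~2. Your claim that any rank-$1$ individual with positive crowding distance ``wins any single tournament it enters with probability at least $\tfrac12$'' is false: a binary tournament between two rank-$1$ individuals is decided by crowding distance, and a stage-2 ``desirable'' individual (one adjacent to an uncovered value of $F_I^*$) has only a positive, \emph{finite} crowding distance, so it loses outright to any rank-$1$ opponent with strictly larger crowding distance --- for instance the up to four individuals with infinite crowding distance, or any individual whose neighbors in the sorted lists are farther away. Your closing paragraph asserts that ``the worst case is a rank-$1$ opponent whose crowding distance coincidentally equals that of $x$,'' which silently skips the case of a strictly larger crowding distance. The paper repairs this differently: using Lemmas~3 and~4 of the \oneminmax analysis, it shows each value in $V\setminus(V^+\cap V^-)$ has a representative with $\cDis \ge \tfrac{2}{v_{\max}-v_{\min}}$, that at most $2(n-2k-1)+16$ individuals can match or exceed this threshold, and hence at least $\tfrac N2 - 9$ potential opponents are guaranteed losers; this gives selection probabilities of only about $0.17$ and $0.18$ (not $1-\tfrac1e$ and $\tfrac34$), which is still a constant and therefore still yields the $O(n\log n)$ bound for stage~2. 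Since stage~2 is lower order, your final constants are unaffected, but as written your stage-2 justification does not hold and needs to be replaced by a counting argument of this kind.
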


The proof of this result follows the outline of the proof of Theorem~\ref{thm:basic}, but needs some technical arguments from~\cite{ZhengLD22} on the probability that an individual next to an uncovered spot on the Pareto front is chosen to be mutated.

\begin{proof}[Proof of Theorem~\ref{thm:bin}]
Since we aim for an asymptotic statement, we shall always assume that $n$, and consequently $N$, are sufficiently large. 

From Lemma \ref{lem:stage1} we know that stage $1$ takes at most $e(\frac{4k}{3})^k$ iterations in expectation.

The analysis for stage $2$ is similar to that in Lemma \ref{lem:stage2}, but due to the different parent selection method, we need to find a new lower bound for $p$, the probability of selecting a suitable parent. Consider an iteration $t$ of stage $2$, let $V=f(P_t)$, and let $v_{\min}=\min\{f_1(x)\mid x\in P_t\}$ and $v_{\max}=\max\{f_1(x)\mid x\in P_t\}$. Also, define $V^{+}=\{(v_1, v_2)\in V \mid \exists y \in P_t:(f_1(y), f_2(y)) = (v_1 + 1, v_2 - 1)\}$ and $V^{-}=\{(v_1, v_2)\in V \mid \exists y \in P_t:(f_1(y), f_2(y)) = (v_1 - 1, v_2 + 1)\}$. Then by Lemma~$3$ of \cite{ZhengLD22}, for any $(v_1, v_2)\in V\backslash(V^{+}\cap V^{-})$, there is at least one individual $x\in P_t$ with $f(x)=(v_1, v_2)$ and $\cDis(x)\geq \frac{2}{v_{\max}-v_{\min}}$. We call such individuals ``desirable individuals''. By Lemma $4$ of \cite{ZhengLD22}, for any $(v_1, v_2)\in V^{+}\cap V^{-}$, there are at most two individuals $x\in P_t$ with $f(x)=(v_1, v_2)$ and $\cDis(x)\geq \frac{2}{v_{\max}-v_{\min}}$. Since $|F_I^*\backslash\{(2k, n),(n, 2k)\}| = n-2k-1$, there are at most $2(n-2k-1)$ such individuals. By the proof of Lemma~\ref{lem:keeprank1}, there are at most $16$ individuals with objectives values $(k, n+k)$, $(2k, n)$, $(n, 2k)$, or $(n+k, k)$ and positive crowding distance. Therefore, there are at least $(N-1)-2(n-2k-1)-16\geq \frac{N}{2}-9$ individuals that, if chosen as the opponent of a desirable individual, the winner of the binary tournament is a desirable individual. Thus, with $N$ independent tournaments, the probability that a desirable individual participates and wins in a binary tournament is at least $p=1-(1-\frac{1}{N}\frac{\frac{N}{2}-9}{N-1})^N=1-(1-\frac{1}{2N}\frac{N-18}{N-1})^N$, which is an increasing function for $N > 0$. So for $N \geq 4(n-2k+3) \geq 28$, we have $p\geq 0.17$. With the two-permutation tournament scheme, the probability is at least $\frac{\frac{N}{2}-9}{N-1}=\frac{N-18}{2(N-1)} \geq 0.18$ for $N \geq 28$. Following the proof of Lemma \ref{lem:stage2}, the expected number of iterations when using either tournament scheme is at most $\frac{en}{\Omega(1)}(\sum_{e-1}^{n-v}\frac{1}{i}+\sum_{i=k}^{v}\frac{1}{i})=O(n\log n)$, where $v$ is the number of $1$-bits in the first $x\in S_I^*$ found by the algorithm.

The analysis for stage $3$ is similar to that in Lemma~\ref{lem:stage3}. Let $x$ be an individual in $P_t$ such that $|x|_1 = k$. Suppose $0^n \notin P_t$, then $x$ has the largest $f_2$ value in rank-$1$ individuals of $P_t$. Therefore $\cDis(x) = \infty$ and if $x$ is selected to participate in a binary tournament, in the worst case it wins with probability $\frac{1}{2}$. If using $N$ independent binary tournaments, the probability of $x$ chosen as a mutating parent is $1-(1-\frac{1}{2}\frac{2}{N})^N \geq 1-\frac{1}{e}$. With the two-permutation tournament scheme, the probability is $1-(\frac{1}{2})^2 = \frac{3}{4}$. Following the proof of Lemma~\ref{lem:stage3}, the number of iterations to obtain $0^n$ in stage $3$ is at most $\frac{1}{(1 - \frac{1}{e})\frac{1}{en^k}} = \frac{e^2}{e-1}n^k$ using $N$ independent tournaments, and $\frac{1}{\frac{3}{4}\frac{1}{en^k}}=\frac{4}{3}en^k$ under the two-permutation tournament scheme. The case for covering $1^n$ is symmetrical. So in total stage~$3$ needs in expectation at most $2\frac{e^2}{e-1}n^k$ iterations using $N$ independent tournaments, and $\frac{8}{3}en^k$ iterations under the two-permutation tournament scheme.
\end{proof}

\subsection{Runtime Analysis for the NSGA-II Using Fast Mutation}\label{ssec:fast}

We now consider the NSGA\nobreakdash-II with heavy-tailed mutation, i.e., the mutation operator proposed in~\cite{DoerrLMN17} and denoted by $\mutbeta$ in \cite{DoerrZ21aaai}, a work from which we shall heavily profit in the following. We note that fast mutation, that is, choosing the mutation strength from a heavy-tailed distribution, was shown to lead to good performances in many other works as well~\cite{FriedrichQW18,FriedrichGQW18,QuinzanGWF21,WuQT18,AntipovBD20gecco,AntipovBD20ppsn,AntipovD20ppsn,AntipovBD21gecco,DoerrZ21aaai,CorusOY21,DoerrR22}.

Let $\beta > 1$ be a constant (typically below $3$). Let $D^{\beta}_{n/2}$ be the distribution such that if a random variable $X$ follows the distribution, then $\Pr[X=\alpha] = (C^{\beta}_{n/2})^{-1}\alpha^{-\beta}$ for all $\alpha \in [1..n/2]$, where $n$ is the size of the problem and $C^{\beta}_{n/2} := \sum_{i=1}^{n/2}i^{-\beta}$.  In an application of the mutation operator $\mutbeta$, first an $\alpha$ is chosen according to the distribution $D^{\beta}_{n/2}$ (independent from all other random choices of the algorithm) and then each bit of the parent is flipped independently with probability~$\alpha/n$. Let $x\in \{0, 1\}^n$, $y\sim\mutbetaarg(x)$, and $H(x,y)$ denote the Hamming distance between $x$ and $y$. Then, by Lemma~$13$ of~\cite{DoerrZ21aaai}, we have
\[P^{\beta}_j := \Pr[H(x,y)=j] = \begin{cases} (C^{\beta}_{n/2})^{-1}\Theta(1) & \text{for } j=1;\\ (C^{\beta}_{n/2})^{-1}\Omega(j^{-\beta}) & \text{for } j\in[2..n/2]. \end{cases}\]

\begin{theorem}{\label{thm:heavy}}Using population size $N \geq 4(n-2k+3)$, any one of the four parent selection methods, and mutating with the $\mutbeta$ operator, the NSGA\nobreakdash-II takes at most $(1+o(1))\frac{1}{P^\beta_k}NK\binom{n}{k}$ fitness evaluations in expectation to cover the entire Pareto front of the \onejumpzerojump benchmark, where $K=2$ for fair selection, $K=\frac{2e}{e-1}$ for uniform selection and selection via $N$ independent binary tournaments, and $K=\frac{8}{3}$ for the two-permutation binary tournament scheme.
\end{theorem}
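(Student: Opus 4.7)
The plan is to mirror the three-stage framework from Theorems~\ref{thm:basic} and~\ref{thm:bin}, replacing only the probability estimates with their fast-mutation counterparts. Both Lemma~\ref{lem:keeprank1} and the stage decomposition depend on the selection and population dynamics rather than on the mutation operator, so they transfer verbatim. I expect stage~3 to produce the dominant $(1+o(1))\frac{NK}{P^\beta_k}\binom{n}{k}$ term, while stages~1 and~2 contribute lower-order terms that the $(1+o(1))$ factor absorbs.

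For stage~1, the key observation is that $\mutbetaarg$ samples $\alpha = 1$ with probability $\Pr[X=1] = (C^\beta_{n/2})^{-1} = \Theta(1)$, and conditional on this the operator is standard bit-wise mutation with rate $1/n$. Hence the bound of Lemma~\ref{lem:stage1} applies up to a constant factor, yielding $O((4k/3)^k)$ expected iterations, which is $o(\binom{n}{k}/P^\beta_k)$ for $k\in[2..n/4]$. For stage~2, I would first note that $P^\beta_1 = \Theta(1)$ and that conditional on a single-bit flip the flipped position is uniform, so the probability of producing a Hamming neighbour that shifts $|x|_1$ by one is $\Omega((n-v)/n)$ or $\Omega(v/n)$, matching the bit-wise case up to constants. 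Reusing the arguments of Lemma~\ref{lem:stage2} and the stage-2 paragraph of Theorem~\ref{thm:bin} then gives $O(n\log n)$ expected iterations under all four selection schemes, again absorbed into the $(1+o(1))$ factor.

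Stage~3 is where the new calculation lies. For a parent $x$ with $|x|_1 = k$, I would derive
\[
\Pr[\mutbetaarg(x) = 0^n] = \sum_{\alpha=1}^{n/2}\Pr[X=\alpha]\left(\frac{\alpha}{n}\right)^k\left(1-\frac{\alpha}{n}\right)^{n-k} = \frac{P^\beta_k}{\binom{n}{k}},
\]
where the second equality follows directly from the identity $P^\beta_k = \sum_\alpha\Pr[X=\alpha]\binom{n}{k}(\alpha/n)^k(1-\alpha/n)^{n-k}$ implicit in Lemma~13 of~\cite{DoerrZ21aaai}. By Lemma~\ref{lem:keeprank1}, such a representative $x$ is kept in the population until $0^n$ is generated. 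The parent-selection probabilities $p_1$ are exactly those already computed in Theorems~\ref{thm:basic} and~\ref{thm:bin}: $p_1 = 1$ for fair selection, $p_1 \ge 1-1/e$ for uniform selection and for $N$ independent binary tournaments (the latter using that $\cDis(x) = \infty$ when $0^n\notin P_t$, so $x$ wins any tournament it enters with probability at least $1/2$), and $p_1 \ge 3/4$ for the two-permutation scheme (since $x$ appears in exactly two tournaments and loses each with probability at most $1/2$). Multiplying the resulting $\binom{n}{k}/(p_1 P^\beta_k)$ by $2$ for the symmetric $1^n$ case and by $N$ for fitness evaluations per iteration gives the stated values of $K$.

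The only genuinely new step is the identity $\Pr[\mutbetaarg(x) = 0^n] = P^\beta_k/\binom{n}{k}$, which falls out cleanly because the total probability of Hamming distance $k$ telescopes with the $\binom{n}{k}$ factor counting equally likely target strings at that distance. The main bookkeeping obstacle will be verifying that the $\cDis=\infty$ arguments underlying the tournament-based selection probabilities transfer unchanged from Theorem~\ref{thm:bin}; since those arguments inspect only the rank-1 structure and the ordering of $f_2$-values among rank-1 individuals, they are insensitive to the choice of mutation operator and carry over as is.
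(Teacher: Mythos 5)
Your stage-2 and stage-3 arguments coincide with the paper's proof: the identity $\Pr[y=0^n]=P^\beta_k/\binom{n}{k}$ for a parent with $k$ ones (via symmetry of the flipped positions conditional on the Hamming distance) is exactly the paper's $p_2$, and the selection probabilities are reused from Theorems~\ref{thm:basic} and~\ref{thm:bin} in the same way. The gap is in stage~1. By conditioning on $\alpha=1$ you deliberately reduce $\mutbetaarg$ to bit-wise mutation and inherit the bound $O\left(\left(\tfrac{4k}{3}\right)^k\right)$ from Lemma~\ref{lem:stage1}; but your claim that this is $o\left(\binom{n}{k}/P^\beta_k\right)$ for all $k\in[2..n/4]$ is false. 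Since $\binom{n}{k}\le (en/k)^k$ and $1/P^\beta_k=O(k^\beta)$, we have $\binom{n}{k}/P^\beta_k = O\left((en/k)^k k^\beta\right)$, whereas $\left(\tfrac{4k}{3}\right)^k\big/(en/k)^k=\left(\tfrac{4k^2}{3en}\right)^k\to\infty$ once $k\ge c\sqrt{n}$ for a suitable constant $c$. For $k=n/4$, say, your stage-1 term $(n/3)^{n/4}$ dwarfs the claimed total bound, so it cannot be absorbed into the $(1+o(1))$ factor and the theorem as stated does not follow from your argument on the full parameter range.

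The fix is precisely where the heavy tail has to be used: from a parent with $i<k$ ones, bound the success probability by the probability of flipping \emph{exactly} $k-i$ bits times the conditional probability that they are the right ones, i.e.
\[
\frac{\binom{n-i}{k-i}}{\binom{n}{k-i}}\,P^\beta_{k-i}\;\ge\;\left(\frac{n-k}{n}\right)^{i}P^\beta_{k-i}\;\ge\;\left(\frac{3}{4}\right)^{k}\frac{\beta-1}{\beta}\,\Omega(k^{-\beta}),
\]
(and symmetrically for $i>n-k$), which gives an expected stage-1 length of $O\left(\left(\tfrac{4}{3}\right)^k k^\beta\right)$. This \emph{is} $o\left(\binom{n}{k}\right)$ for all $k\le n/4$ because $\binom{n}{k}\ge (n/k)^k\ge 4^k$, so the $(1+o(1))$ factor absorbs it. Everything else in your proposal (the transfer of Lemma~\ref{lem:keeprank1}, the stage decomposition, the $O(n\log n)$ stage-2 bound from $P^\beta_1=\Theta(1)$, and the crowding-distance arguments for the tournament schemes) is sound and matches the paper.
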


\begin{proof}[Proof of Theorem~\ref{thm:heavy}]
The analysis for stage $1$ is similar to the proof of Lemma~\ref{lem:stage1}. Now the probability of obtaining an individual with $k$ $1$-bits from an individual with $i < k$ $1$-bits is $p_1 = \frac{{n-i \choose k-i}}{{n \choose k-i}}P_{k-i}^\beta=\frac{(n-i)!(n-k+i)!}{(n-k)!n!}P_{k-i}^\beta > (\frac{n-k}{n})^iP_{k-i}^\beta > (\frac{3}{4})^kP_{k-i}^\beta$. Similarly, the probability of obtaining an individual with $n-k$ $1$-bits from an individual with $i > n-k$ $1$-bits is $p_2 = \frac{{i \choose i-(n-k)}}{{n \choose i-(n-k)}}P_{i-(n-k)}^\beta=\frac{i!(2n-i-k)!}{(n-k)!n!}P_{i-(n-k)}^\beta > (\frac{n-k}{n})^{n-i}P_{k-i}^\beta > (\frac{3}{4})^kP_{k-i}^\beta$. Since $\beta > 1$, $C^{\beta}_{n/2} \leq \frac{\beta}{\beta -1}$. Consequently $p_1, p_2 > (\frac{3}{4})^k\frac{\beta-1}{\beta}\Omega(k^{-\beta})$. So the number of iterations needed in expectation is at most $O((\frac{4}{3})^{k}k^\beta)$.

The analysis of stage $2$ follows the one in Lemma~\ref{lem:stage2}. Now, from an individual with $i$ $1$-bits, $p_{i}^{+}$, the probability of generating an individual with one more $1$ is $\frac{n-i}{n}P^{\beta}_1$, and $p_{i}^{-}$, the probability of generating an individual with one more $0$ is $\frac{i}{n}P^{\beta}_1$. So the number of iterations needed in expectation for stage $2$ is at most
\[\sum_{i=v}^{n-k} \frac{1}{pp_{i}^{+}} + \sum_{i=k}^{v} \frac{1}{pp_{i}^{-}} \leq \frac{2en}{(e-1)P^{\beta}_1}(1+\ln{n})=O(n\log n).\]

The analysis of stage $3$ is again similar to the one in Lemma~\ref{lem:stage3} except that now we profit from the much larger probability to flip $k$ bits. The probability of selecting an individual with $k$ $1$-bits as a mutating parent, $p_1$, remains unchanged, and thus $p_1=1$ with fair selection and $p_1 \geq 1-\frac{1}{e}$ with uniform selection. Now, the probability of generating an individual with $0$ $1$-bits from an individual with $k$ $1$-bits, $p_2 = \frac{P^{\beta}_{k}}{\binom{n}{k}}$. So the number of iterations needed in expectation to find $0^n$ is at most $\frac{\binom{n}{k}}{P^{\beta}_{k}}$ with fair selection, and $\frac{e\binom{n}{k}}{(e-1)P^{\beta}_{k}}$ with uniform selection. Finding $1^n$ is the same. So the total number of iterations needed in expectation is at most $\frac{2\binom{n}{k}}{P^{\beta}_{k}}$ for fair selection and $\frac{2e\binom{n}{k}}{(e-1)P^{\beta}_{k}}$ for uniform selection.

Combining the three stages, the number of fitness evaluations needed in total is at most $(1+o(1))\frac{1}{P^\beta_k}K\binom{n}{k}$ in expectation, where $K=2$ for fair selection, $K=\frac{2e}{e-1}$ for uniform selection.

With binary tournaments, the analysis is the same with the exception that the probabilities of selecting a particular individual are different. According to the proof of Theorem~\ref{thm:bin}, using $N$ independent binary tournaments, the probability of selecting a desirable individual in stage $2$ becomes at least $0.17$, and the probability of selecting a desirable individual in stage $3$ is at least $1-\frac{1}{e}$. Accordingly, $K=\frac{2e}{e-1}$. Moreover, with the two-permutation binary tournament scheme, in stage~$2$ the probability that a desirable parent is selected is at least $0.18$, and the probability in stage $3$ is at least $\frac{3}{4}$, which makes $K$ equal to $\frac{8}{3}$. 
\end{proof}

Noting that $\binom{n}{k}$ is by a factor of $k^{\Omega(k)}$ smaller than $n^k$, whereas $1/P^\beta_k$ is only $O(k^{\beta})$, we see that the runtime guarantee for the heavy-tailed operator is by a factor of $k^{\Omega(k)}$ stronger than our guarantee for bit-wise mutation. Without a lower bound on the runtime in the bit-wise setting, we cannot claim that the heavy-tailed algorithm is truly better, but we strongly believe so (we do not see a reason why the NSGA\nobreakdash-II with bit-wise mutation should be much faster than what our upper bound guarantees). 

We note that it is easy to prove a lower bound of $\Omega(n^k)$ for the runtime of the NSGA\nobreakdash-II with bit-wise mutation (this is a factor of $N$ below our upper bound, which stems from pessimistically assuming that in each iteration, $N$ times a parent is selected that has a $\Theta(n^{-k})$ chance of generating an extremal point of the Pareto front). For $k$ larger than, say, $\log(N)$, this weak lower bound would suffice to show that the heavy-tailed NSGA\nobreakdash-II is asymptotically faster. We spare the details and hope that at some time, we will be able to prove tight lower bounds for the NSGA\nobreakdash-II.

When $k \le \sqrt n$, the runtime estimates above can be estimated further as follows. In~\cite{DoerrZ21aaai}, it was shown that \[P^{\beta}_i \geq \begin{cases} \frac{\beta-1}{e\beta} & \text{for } i=1;\\ \frac{\beta-1}{4\sqrt{2\pi}e^{8\sqrt{2}+13}\beta}i^{-\beta} & \text{for } i\in[2..\lfloor\sqrt{n}\rfloor]. \end{cases}\]
Also, for $k \le \sqrt n$, a good estimate for the binomial coefficient is $\binom{n}{k} \le \frac{n^k}{k!}$ (losing at most a constant factor of $e$, and at most a $(1+o(1))$-factor when $k = o(\sqrt n)$. Hence the runtime estimate from Theorem~\ref{thm:heavy} for $k \le \sqrt n$ becomes
\[
(1+o(1)) K \frac{4\sqrt{2\pi}e^{8\sqrt{2}+13}\beta}{\beta-1} N k^\beta \frac{n^k}{k!},
\]
which is a tight estimate of the runtime guarantee of Theorem~\ref{thm:heavy} apart from constants independent of $n$ and $k$. In any case, this estimate shows that for moderate values of $k$, our runtime guarantee for the heavy-tailed NSGA\nobreakdash-II is better by a factor of $\Theta(k! k^{-\beta})$, which is substantial already for small values of~$k$.

\section{Experiments}\label{sec:exp}

To complement our theoretical results, we also experimentally evaluate the runtime of the NSGA\nobreakdash-II algorithm on the \ojzj benchmark. 

\subsection{Settings}\label{ssec:settings}

We implemented the algorithm as described in Section~\ref{sec:prelim} in Python (the code can be found at \url{https://github.com/deliaqu/NSGA-II}). We use the following settings. 
\begin{itemize}
    \item Problem size $n$: $20$ and $30$.
    \item Jump size $k$: $3$.
    \item Population size $N$: In our theoretical analysis, we have shown that with $N=4(n-2k+3)$, the algorithm is able to recover the entire Pareto front. To further explore the effect of the population size, we conduct experiments with this population size, with half this size, and with twice this size, that is, for  $N\in \{2(n-2k+3), 4(n-2k+3), 8(n-2k+3)\}$.
    \item Parent selection: For simplicity, we only experiment with using $N$ independent binary tournaments.
    \item Mutation operator: Following our theoretical analysis, we consider two mutation operators, namely bit-wise mutation (flipping each bit with probability~$\frac{1}{n}$) and fast mutation, that is, the heavy-tailed mutation operator $\mutbeta$. We set $\beta$ to be $1.5$.
    \item Number of independent repetitions per setting: $50$. This number is a compromise between the longer runtimes observed on a benchmark like \ojzj and the not very concentrated runtimes (for most of our experiments, we observed a corrected sample standard deviation between 50\% and 80\% of the mean, which fits to our intuition that the runtimes are dominated by the time to find the two extremal points of the Pareto front).
\end{itemize}

\subsection{Experimental Results}

\addtolength{\tabcolsep}{4pt}
\begin{table}
\centering
\caption{Average runtime of the NSGA-II and the GSEMO with bit-wise mutation and heavy-tailed mutation operator on the \ojzj benchmark with ${k=3}$.\\}
\label{tab1}
\begin{tabular}{|c|c|c|c|c|c|}
\hline
& \multicolumn{2}{|c|}{$n=20$} & \multicolumn{2}{|c|}{$n=30$} \tabularnewline
\hline
& Bit-wise & HT & Bit-wise & HT \\
\hline
$N=2(n-2k+3)$ & $264932$ & $178682$ & $1602552$ & ~$785564$ \\
$N=4(n-2k+3)$ & $366224$ & $188213$ & $1777546$ & $1080458$ \\
$N=8(n-2k+3)$ & $529894$ & $285823$ & $2836974$ & $1804394$\\
\hline
GSEMO & $511365$ & $215001$ & $2654620$ & $1422455$\\
\hline
\end{tabular}
\end{table}
Table \ref{tab1} contains the average runtime (number of fitness evaluations done until the full Pareto front is covered) of the NSGA\nobreakdash-II algorithm when using bit-wise mutation and the heavy-tailed mutation operator. The most obvious finding is that the heavy-tailed mutation operator already for these small problem and jump sizes gives significant speed-ups.

While our theoretical results are valid only for $N \ge 4(n-2k+3)$, our experimental data suggests that also with the smaller population size $N=2(n-2k+3)$ the algorithm is able to cover the entire Pareto front of the \ojzj benchmark. We suspect that this is because even though theoretically for each objective value there could be $4$ individuals in each generation having positive crowding distances, empirically this happens relatively rarely and for each objective value the expected number of individuals with positive crowding distances is closer to $2$. We also note that with a larger population, e.g., $N=8(n-2k+3)$, naturally, the runtime increases, but usually by significantly less than a factor of two. This shows that the algorithm is able to profit somewhat from the larger population size. 

For comparison, we also conducted experiments with the global simple evolutionary
multi-objective optimizer (GSEMO)~\cite{Giel03}. The results can be found on the last row of Table~\ref{tab1}. We note that with bit-wise mutation, the GSEMO has a similar runtime as the NSGA-II when $N=8(n-2k+3)$. In other words, when using the population size $N=4(n-2k+3)$, for which our runtime guarantee applies, or smaller, but still efficient population sizes, the NSGA-II clearly outperforms the GSEMO. The heavy-tailed operator speeds up the GSEMO more than the NSGA-II, but still the NSGA-II remains faster with the two smaller population sizes.

\subsection{Crossover}

Besides fast mutation, two further mechanisms were found that can speed up the runtime of evolutionary algorithms on (single-objective) jump functions, namely the stagnation-detection mechanism of Rajabi and Witt~\cite{RajabiW20,RajabiW21evocop,RajabiW21gecco,DoerrR22} and crossover~\cite{JansenW02,DangFKKLOSS16,DangFKKLOSS18,AntipovBD22}. We are relatively optimistic that stagnation detection, as when used with the global SEMO algorithm~\cite{DoerrZ21aaai}, can provably lead to runtime improvements, but we recall from~\cite{DoerrZ21aaai} that the implementation of stagnation detection is less obvious for MOEAs. For that reason, we ignore this approach here and immediately turn to crossover, given that no results proving a performance gain from crossover for the NSGA\nobreakdash-II exist, and there are clear suggestions on how to use it in~\cite{DangFKKLOSS18}.

Inspired by~\cite{DangFKKLOSS18}, we propose and experimentally analyze the following variant of the NSGA\nobreakdash-II. The basic algorithm is as above.
In particular, we also select $N$ parents via independent tournaments. We partition these into pairs. For each pair, with probability $90\%$, we generate two intermediate offspring via a $2$-offspring uniform crossover (that is, for each position independently, with probability $0.5$, the first child inherits the bit from the first parent, and otherwise from the second parent; the bits from the two parents that are not inherited by the first child make up the second child). We then perform bit-wise mutation on these two intermediate offspring. With the remaining $10\%$ probability, mutation is performed directly on the two parents.

\begin{table}[t]
\centering
\caption{Average runtime of the NSGA\nobreakdash-II with bit-wise mutation and crossover on the \ojzj benchmark with ${k=3}$.\\}\label{tab2}

\begin{tabular}{|r|r|r|r|}
\hline
& $n=20$ & $n=30$ & $n=40$ \\
\hline
$N=2(n-2k+3)$ & $68598$ & $265993$ & $773605$\\
$N=4(n-2k+3)$ & $45538$ & $205684$ & $510650$\\
$N=8(n-2k+3)$ & $68356$ & $316500$ & $635701$\\
\hline
\end{tabular}
\end{table}

Table \ref{tab2} contains the average runtimes for this algorithm. We observe that crossover leads to massive speed-ups (which allows us to also conduct experiments for problem size $n=40$). More specifically, comparing the runtimes for $n=30$ and bit-wise mutation (which is fair since the crossover version also uses this mutation operator), the crossover-based algorithm only uses between 8\% and 15\% percent of the runtime of the mutation-only algorithm.

We note that different from the case without crossover, with $N=2(n-2k+3)$, the algorithm consistently takes more time than with $N=4(n-2k+3)$. We suspect that the smaller population size makes it less likely that the population contains two parents from which crossover can create a profitable offspring.  We note that in~\cite{DangFKKLOSS18}, the best speed-up from crossover on a jump function with jump size $k=3$ was obtained for a population size of $\Theta(n)$ -- since this result was for the single-objective jump function, the whole population is concentrated on the fitness value of the local optima. As our larger runtimes for $N = 8(n-2k+3)$ suggest, for the NSGA\nobreakdash-II comparably smaller population sizes (having in average a constant number of individuals on each objective value) are more profitable.

\begin{table}[t]
\centering
\caption{Average runtime of the NSGA\nobreakdash-II with heavy-tailed mutation and crossover on the \ojzj benchmark with ${k=3}$.\\}\label{tab3}

\begin{tabular}{|r|r|r|r|}
\hline
& $n=20$ & $n=30$ & $n=40$ \\
\hline
$N=2(n-2k+3)$ & $52874$ & $234005$ & $695998$\\
$N=4(n-2k+3)$ & $60626$ & $248681$ & $696766$\\
$N=8(n-2k+3)$ & $103741$ & $474932$ & $1504574$\\
\hline
\end{tabular}
\end{table}

Table~\ref{tab3} contains the average runtimes when using heavy-tailed mutation with crossover. Similarly to the case of using bit-wise mutation, the selected parents are divided into pairs and with $90\%$ chance uniform crossover is performed on the pair followed by heavy-tailed mutation, and with $10\%$ chance only mutation is performed. We observe that when compared with using bit-wise mutation and crossover, we only see improvement in the case where $N=2(n-2k+3)$. We do not yet understand why the two mechanisms to speed up the runtimes in combination do not consistently bring further improvements. We believe that a much more thorough understanding of the working principles of crossover in this application is needed to make further progress here.

\section{Conclusions and Future Works}

In this first mathematical runtime analysis of the NSGA\nobreakdash-II on a bi-ojective multimodal problem, we have shown that the NSGA\nobreakdash-II with a sufficient population size performs well on the \ojzj benchmark and profits from heavy-tailed mutation, all comparable to what was shown before for the GSEMO algorithm. 

Due to the more complicated population dynamics of the NSGA\nobreakdash-II, we could not yet prove an interesting lower bound. For this, it would be necessary to understand how many individuals with a particular objective value are in the population. Note that this number is trivially one for the GSEMO, which explains why for this algorithm lower bounds could be proven~\cite{DoerrZ21aaai}. Understanding better the population dynamics of the NSGA\nobreakdash-II and then possibly proving good lower bounds is an interesting and challenging direction for future research. 

A second interesting direction is to analyze theoretically how the NSGA\nobreakdash-II with crossover optimizes the \ojzj benchmark. Our experiments show clearly that crossover can lead to significant speed-ups here. Again, we currently do not have the methods to analyze this algorithm, and we speculate that a very good understanding of the population dynamics is necessary to solve this problem. We note that the only previous work~\cite{BianQ22ppsn} regarding the NSGA\nobreakdash-II with crossover does not obtain faster runtimes from crossover. Besides that work, we are only aware of two other runtime analyses for crossover-based MOEAs, one for the multi-criteria all-pairs shortest path problem~\cite{NeumannT10}, the other also for classic benchmarks, but with an initialization that immediately puts the extremal points of the Pareto front into the population~\cite{QianYZ13}. So it is unlikely that previous works can be used to analyze the runtime of the crossover-based NSGA\nobreakdash-II on \ojzj.

\subsection*{Acknowledgment}

This work was supported by a public grant as part of the
Investissements d'avenir project, reference ANR-11-LABX-0056-LMH,
LabEx LMH.
\bibliographystyle{alphaurl}
\bibliography{ich_master,alles_ea_master}

}
\end{document}